\documentclass{article}

\usepackage[preprint]{corl_2026} %

\usepackage{amsmath, amssymb, amsthm}
\usepackage{graphicx}
\usepackage{algorithm}
\usepackage{algpseudocode}
\usepackage{tikz}
\usepackage{enumitem}
\usepackage{subcaption}
\usepackage{booktabs}
\usepackage{siunitx}
\usepackage{wrapfig}
\usepackage{float}
\usetikzlibrary{decorations.pathmorphing,arrows.meta,calc,angles,quotes}
\theoremstyle{definition}
\newtheorem{definition}{Definition}
\theoremstyle{plain}
\newtheorem{proposition}{Proposition}

\title{Actuator Dynamics Curricula for Narrow-Viability Tasks in Legged Robot Learning}

\author{
  Kousheek Chakraborty$^{1,2}$, \;
  Chandan K. Rajendra$^{1}$, \; 
  Ayham Alharbat$^{1,3}$, \;
  Abeje Y. Mersha$^{1}$\\
  $^{1}$ Saxion University of Applied Sciences, $^{2}$ University of Groningen, $^{3}$ University of Twente
  \\
  {\texttt{k.chakraborty@\{saxion,rug\}.nl}}
}

\begin{document}
\maketitle

\begin{abstract}
Reinforcement learning has produced capable controllers across a broad range of legged-robot tasks, but a subset of these tasks fail to converge under standard training: those for which most exploration trajectories terminate before producing useful gradient signal. To address such tasks we introduce the \emph{Actuator Dynamics Curriculum}, a procedure that initializes joint stiffness at a high value and anneals it toward the system-identified value as completed episode lengths grow. Using a cart-pole system as a representative example, we show that higher closed-loop joint natural frequency under critical damping enlarges the viability kernel of the underlying Markov Decision Process, increasing the fraction of initial states from which the task is feasible. We validate the kernel monotonicity on the cart-pole and apply the curriculum to a quadrupedal-to-handstand transition on the Boston Dynamics Spot, a narrow-viability task where training under fixed identified stiffness plateaus at a policy that never completes the transition. The trained policy executes the transition in simulation across 10 seeds and transfers to hardware. More broadly, our results suggest that simulated actuator dynamics is a useful axis along which to design curricula for tasks in which exploration is bottlenecked by termination conditions rather than by reward signal.
\end{abstract}

\keywords{Legged Robots, Reinforcement Learning, Curriculum Learning}

\begin{figure}[h]
  \centering
  \begin{subfigure}[b]{0.49\linewidth}
    \includegraphics[width=\linewidth, trim={0 0 0.5cm 0}, clip]{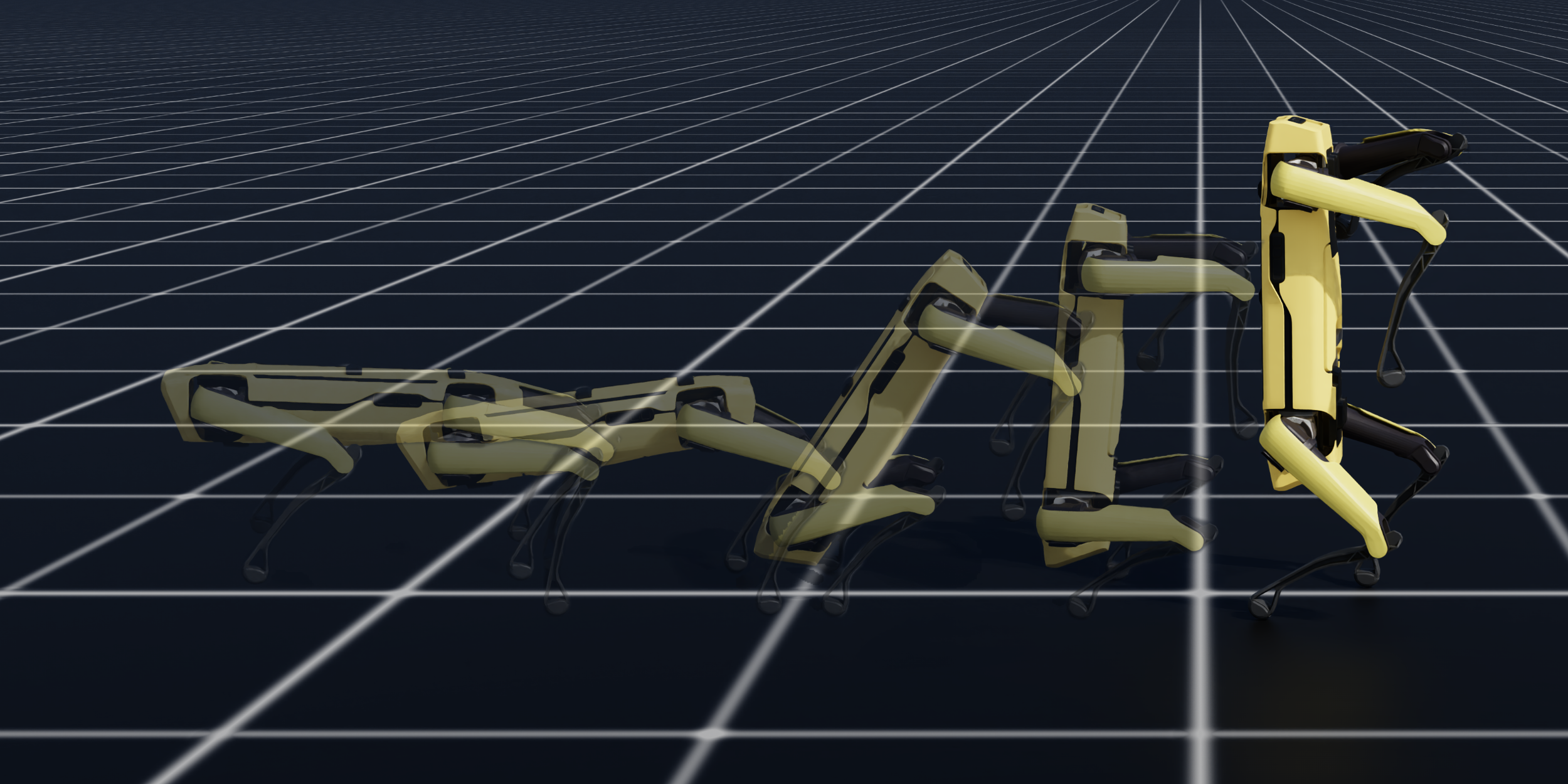}
    \caption{Simulation}
    \label{fig:cover-sim}
  \end{subfigure}
  \hfill
  \begin{subfigure}[b]{0.49\linewidth}
    \includegraphics[width=\linewidth, trim={0.5cm 0 0 0}, clip]{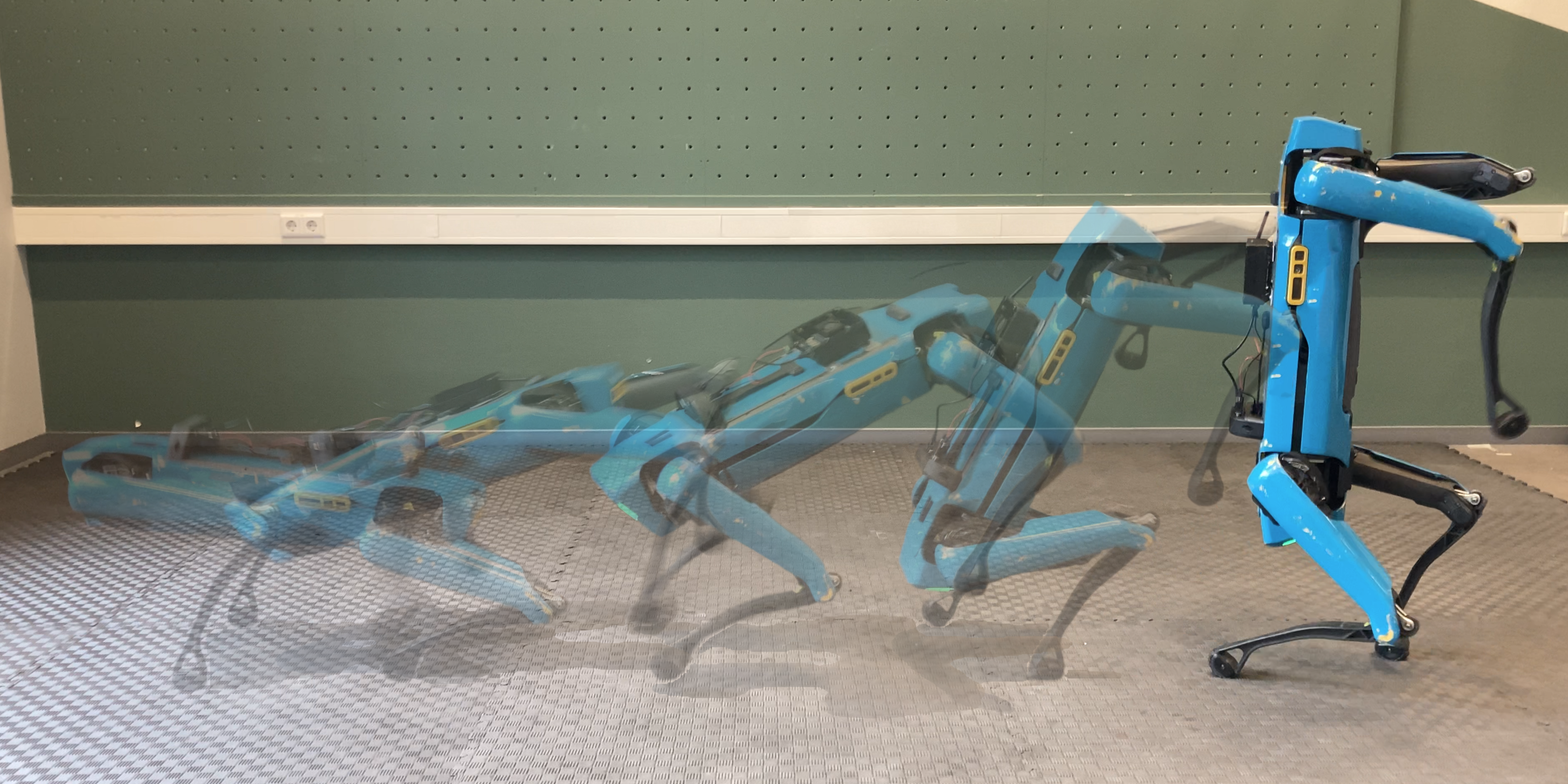}
    \caption{Hardware}
    \label{fig:cover-real}
  \end{subfigure}
  \caption{
    Boston Dynamics' Spot quadruped executing a transition to a handstand from a four-legged standing stance. The policy is trained and tested first in simulation~(\subref{fig:cover-sim}) and transfers zero-shot to a physical Spot~(\subref{fig:cover-real}).
  }
  \label{fig:spot-rl}
\end{figure}

\section{Introduction}
\label{sec:introduction}

Reinforcement learning (RL) has enabled increasingly capable motion planners and controllers for legged robots, including locomotion across diverse terrains~\cite{hwangboLearningAgileDynamic2019, leeLearningQuadrupedalLocomotion2020, rudinParkourWildLearning2025, kumarRMARapidMotor2021, zuckerOptimizationApproachRough2010, Zucker-2011-121524}, dynamic jumping~\cite{chengExtremeParkourLegged2024a}, and bipedal balance on quadrupedal platforms~\cite{xiaoLearningStableBipedal2025, zhangBipedalismQuadrupedalRobots2025}.
A common ingredient across these results is large-scale parallel simulation~\cite{rudinLearningWalkMinutes2022a}, which compensates for the sample inefficiency of policy gradient methods by collecting orders of magnitude more environment interactions per wall-clock second.
However, for a subset of robot learning tasks, parallelization alone is insufficient: the set of trajectories that both pursue reward and avoid early termination is vanishingly small under any practical sampling budget~\cite{heimLearningOutsideViability2018}.
We refer to these as \emph{narrow-viability} tasks.

A representative instance is the transition from a quadrupedal stance to a balanced handstand on the front legs of a Boston Dynamics Spot.
The maneuver requires the body to tip forward by $90^\circ$ within roughly one second, during which the hind legs lose ground contact, and the front legs take the load and stabilize the robot's body.
Throughout this trajectory, the robot passes through a region of state space: mid-pitch, zero-contact, rotating, in which most actions cause the robot to fall and trigger an early termination of the training.

One approach to address this class of tasks is imitation learning, which sidesteps the exploration problem by training from expert demonstrations~\cite{vollenweiderAdvancedSkillsMultiple2023, fuchiokaOPTMimicImitationOptimized2023a, pengLearningBipedalWalking2025, xiaoStableImitationMultigait2025}.
However, such demonstrations are unavailable for many platforms and behaviors of interest.
A more general alternative is curriculum learning, which varies task difficulty during training~\cite{narvekarCurriculumLearningReinforcement2020}.
In our setting, after system identification on the Spot~\cite{millerHighPerformanceReinforcementLearning2025}, the natural choice was to train under the identified PD parameters, but we observed that training failed to converge under those gains, which prompted us to consider whether the simulated actuator dynamics themselves could be varied as part of the curriculum.
Specifically what we propose is that each joint is initialized with stiffness higher than its system-identified value and corresponding damping chosen to keep the closed-loop joint dynamics critically damped, and the stiffness is then annealed to the identified value as the policy improves.

The intuition is that higher joint stiffness results in more aggressive tracking of commanded positions, giving the policy more margin to recover from imperfect actions during early training; as the policy improves, the dynamics are returned to those of the hardware so that the deployed policy operates under the actuators it will actually see.
We refer to this procedure as an \emph{Actuator Dynamics Curriculum}.
It is parametrized by a single physically meaningful scalar, the joint stiffness, and requires only that actuator dynamics can be varied at simulation time, a capability that is available in modern physics engines used for RL training~\cite{NVIDIA_Isaac_Sim}.

We support this approach by analyzing a representative cart-pole system in which the cart position is tracked by a PD controller. We show that increasing the closed-loop joint natural frequency under critical damping enlarges the viability kernel~\cite{aubinViabilityTheory2009} of the task or the set of states from which task constraints can be satisfied indefinitely~\cite{heimLearningOutsideViability2018} and hence increases the fraction of initial states from which the task is feasible. This result provides a motivation for training under elevated stiffness because in the representative bounded-command system, higher natural frequency enlarges the set of states from which task constraints can be maintained. However, in the full RL setting, this result does not by itself establish viability-kernel enlargement as the mechanism responsible for improved policy learning. We investigate this empirically in Section~4. The statement and proof appear in Section~\ref{sec:problem} and Appendix~\ref{apx-proofs}.

\subsection{Contributions}
This work makes the following contributions:
\begin{enumerate}
    \item We introduce the \emph{Actuator Dynamics Curriculum}, a training procedure that anneals simulated joint stiffness from a high initial value to its system-identified hardware value while keeping the closed-loop joint dynamics critically damped (Section~\ref{sec:method}).
    \item We characterize narrow-viability as a class of robot learning tasks in which most trajectories under random exploration terminate early, leaving policy-gradient methods without sufficient learning signal to converge (Section~\ref{sec:problem}).
    \item We give an analytical result on a representative cart-pole system: increasing the natural frequency of a position-tracking PD controller under critical damping enlarges the viability kernel and the fraction of initial states from which the task is feasible (Section~\ref{sec:problem}, with proof in Appendix~\ref{apx-proofs}).
    \item We evaluate the method on the handstand task in simulation and demonstrate sim-to-real transfer of the resulting handstand transition policy on a physical Spot robot (Section~\ref{sec:experiments}).
\end{enumerate}

\subsection{Related Work}
\label{sec:related_work}
Acrobatic and bipedal behaviors on quadrupedal robots have been studied through a range of methods. Model-based trajectory optimization has been used to produce a 360\textdegree{} backflip on Mini Cheetah, by solving an offline optimization that respects the robot's actuator limits~\cite{katzMiniCheetahPlatform2019}. Other approaches use reference motions to guide RL training. Adversarial motion priors~\cite{vollenweiderAdvancedSkillsMultiple2023} learn a set of switchable styles on a wheeled-legged quadruped, including transitions between quadrupedal and humanoid configurations, by imitating motion-capture data. The same family of methods has been applied to blind bipedal walking on a standard quadruped through a teacher-student framework~\cite{pengLearningBipedalWalking2025}, and to multigait and bipedal motions over uneven terrain through imitation~\cite{xiaoStableImitationMultigait2025}. Style-reward formulations have been used to enable a quadruped to switch between quadrupedal and bipedal locomotion modes within a single policy~\cite{queLearningHumanLikeBipedal2025}. OPT-Mimic~\cite{fuchiokaOPTMimicImitationOptimized2023a} replaces motion-capture references with trajectory-optimization rollouts and produces front hops, a 180\textdegree{} backflip, and biped stepping on a Solo 8 quadruped.

Other agile behaviors have been learned without reference motions. An extreme-parkour controller~\cite{chengExtremeParkourLegged2024a} learns end-to-end from depth images on a curriculum of increasingly difficult terrains, producing high jumps, long jumps, and a handstand among its skills. TumblerNet~\cite{xiaoLearningStableBipedal2025} uses a center-of-mass/center-of-pressure estimator with a balance-targeting reward to learn bipedal locomotion on quadrupeds, and risk-adaptive distributional RL has been applied to bipedal walking on a Unitree Go2~\cite{zhangBipedalismQuadrupedalRobots2025}.

The choice of action representation and low-level control has also been studied in robot learning. Different action-space parameterizations have been compared for learning robot motor skills~\cite{esser2025action}, and their effects on manipulation learning and sim-to-real transfer have been analyzed~\cite{aljalbout2024role}. Low-level controller gains have likewise been shown to substantially affect policy learning~\cite{bronars2026tune}. Our approach instead explicitly schedules the closed-loop joint dynamics during training and anneals them toward the system-identified deployment values.

The approach we present in this paper differs from the above in what is changed during training. Rather than giving the policy external information such as a precomputed reference trajectory, a motion-prior dataset, or a hand-tuned reward, we change the simulated robot's actuator dynamics during training and anneal them back toward their system-identified values.

\begin{figure}[t]
  \centering
  \includegraphics[width=\linewidth]{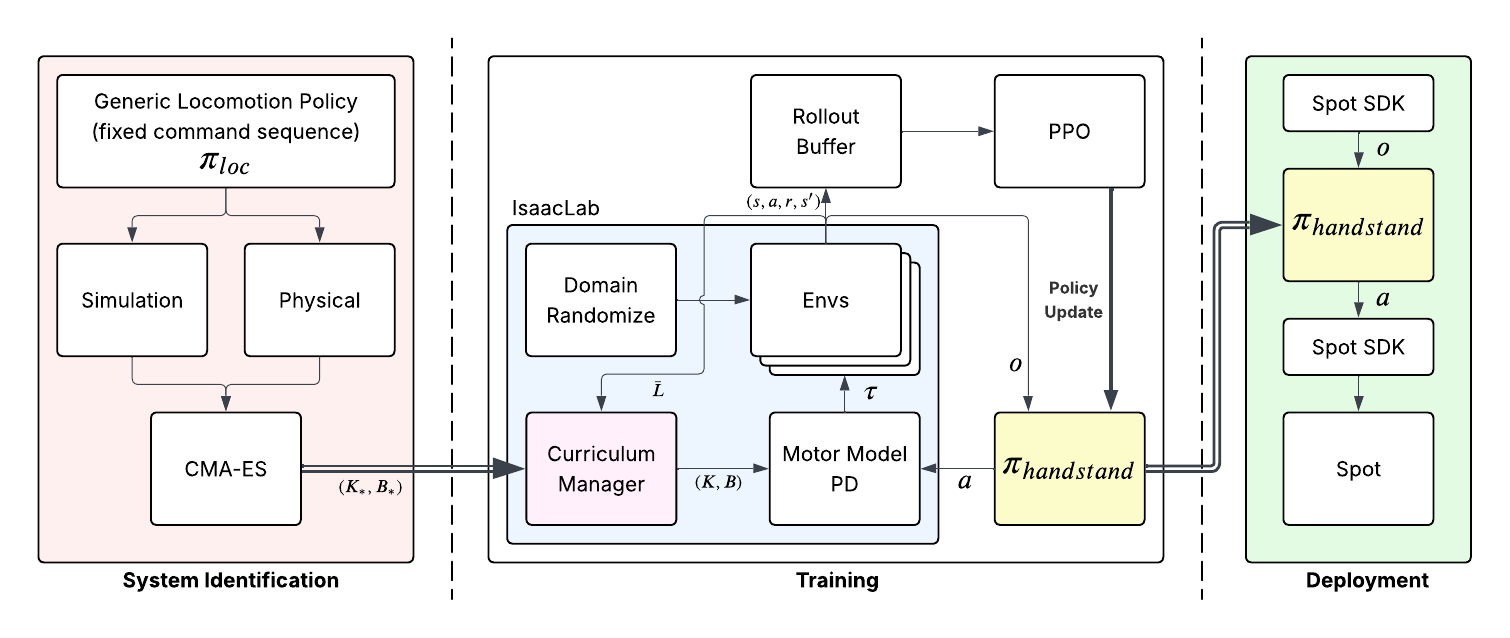}
  \caption{\textbf{System overview.} System identification of actuator parameters $K_*$ is performed via CMA-ES against trajectories collected on a Boston Dynamics Spot running a generic locomotion policy. Training proceeds in IsaacLab with PPO under the Actuator Dynamics Curriculum and domain randomization. The trained policy is deployed to an NVIDIA Jetson Nano onboard the robot, which communicates with the hardware via the Spot SDK.}
  \label{fig:architecture}
\end{figure}

\section{Problem Formulation}
\label{sec:problem}

\subsection{Setup}
\label{sec:setup}

We model the robot and task as a Markov decision process $\mathcal{M}_\kappa = (\mathcal{S}, \mathcal{A}, P_\kappa, r, \gamma, \rho_0, \mathcal{S}_\text{term})$ parameterized by the actuator parameters $\kappa$. States $s \in \mathcal{S}$ and actions $a \in \mathcal{A}$ are continuous, $\gamma \in [0,1)$ is the discount factor, and $\rho_0$ is the initial state distribution. The transition kernel $P_\kappa(s' \mid s, a)$ is induced by the rigid-body dynamics of the robot and the actuator model parameterized by $\kappa$ such that varying $\kappa$ changes the closed-loop dynamics of the actuators and consequently the distribution of states visited under a policy.
The reward $r : \mathcal{S} \times \mathcal{A} \to \mathbb{R}$ is dense and shaped to reward task progress.
The termination set $\mathcal{S}_\text{term} \subset \mathcal{S}$ encodes safety violations such as illegal contact or body collision and entering $\mathcal{S}_\text{term}$ ends the episode and truncates the return.
We optimize a parameterized policy $\pi_\theta(a \mid s)$ to maximize the expected discounted return,
\begin{equation}
    J(\theta;\kappa)
    =
    \mathbb{E}_{s_0 \sim \rho_0,\, \tau \sim \pi_\theta,\, P_\kappa}
    \left[
        \sum_{t=0}^{\infty} \gamma^t r(s_t,a_t)
    \right],
    \label{eq:objective}
\end{equation}
where trajectories are generated under dynamics $P_\kappa$.

The policy outputs an action $a \in \mathcal{A} \subset  \mathbb{R}^n$ for a robot with $n$ joints. The commanded joint position setpoints are defined as offsets from a nominal configuration $q_0 \in \mathbb{R}^n$: $q_d = q_0 + \alpha\, a$, where $\alpha > 0$ is a fixed scaling factor. In the experiments, actions are clipped component-wise to $a_i \in [-1,1]$, resulting in bounded relative joint-position commands.
For each joint $i$, a low-level PD controller produces the actuator torque,
\begin{equation}
    \tau_i = -K_i (q_i - q_{d, i}) - B_i \dot{q}_i,
    \label{eq:pd_law}
\end{equation}
where $K_i > 0$ is the proportional gain (joint stiffness) and $B_i > 0$ is the derivative gain (joint damping).
Under a per-joint idealization in which $M_i$ denotes the effective inertia at joint $i$ and cross-joint coupling is absorbed into $\tau_{\text{ext},i}$, the closed-loop dynamics of joint $i$ can be expressed as,
\begin{equation}
    M_i \ddot{q}_i + B_i \dot{q}_i + K_i (q_i - q_{d,i}) = \tau_{\text{ext}, i},
    \label{eq:actuator}
\end{equation}
where $\tau_{\text{ext}, i}$ collects gravity, contacts, Coriolis terms, and inter-joint coupling. We parameterize the closed-loop joint dynamics by the natural frequency $\omega_{n,i} = \sqrt{K_i / M_i}$ and the damping ratio $\zeta_i = B_i / (2 \sqrt{K_i M_i})$.
Through $P_\kappa$, the closed-loop joint parameters enter both the viability kernel and the policy-gradient landscape of $\mathcal{M}_\kappa$; the analysis in Section~\ref{sec:propositions} exploits this dependence.
The Actuator Dynamics Curriculum varies $K_i$ during training while keeping $\zeta_i = 1$ to remain critically damped, so that only the closed-loop joint bandwidth changes while the damping properties are preserved. The curriculum schedule is specified in Section~\ref{sec:method}.

\subsection{Propositions}
\label{sec:propositions}
We use the standard notion of the viability kernel introduced in~\cite{aubinViabilityTheory2009}, later adapted to RL in~\cite{heimLearningOutsideViability2018, shafieeViabilityLeadsEmergence2024, massianiViabilityFutureActions2024}, restated here in finite-horizon form.

\begin{definition}[Finite-horizon viability kernel]
\label{def:viab}
For horizon $T \in \mathbb{N}$, the finite-horizon viability kernel of $\mathcal{M}_\kappa$ is the set of states from which there exists a policy that keeps the trajectory out of the termination set for $T$ steps with probability one,
\begin{equation}
    \mathcal{V}_T(\mathcal{M}_\kappa) = \left\{ s \in \mathcal{S} : \exists\, \pi,\; \mathbb{P}_{\pi, P_\kappa}\!\left[ s_t \notin \mathcal{S}_\text{term}\ \forall t \in \{0, \dots, T\} \mid s_0 = s \right] = 1 \right\}.
    \label{eq:viab}
\end{equation}
\end{definition}

States outside $\mathcal{V}_T(\mathcal{M}_\kappa)$ cannot be guaranteed to remain outside $\mathcal{S}_\text{term}$ for the full horizon $T$ under any policy. Building on this, we formalize the notion of a narrow-viability task as follows.

\begin{definition}[$\varepsilon$-narrow-viable]
\label{def:narrow_viability}
For $\varepsilon \in (0, 1)$ and horizon $T \in \mathbb{N}$, a task is \emph{$\varepsilon$-narrow-viable} with respect to an initial state distribution $\rho_0$ if a state sampled from the state visitation distribution induced by uniform random exploration lies in the finite-horizon viability kernel with probability at most $\varepsilon$; formally,
\begin{equation}
    \mathbb{P}_{s \sim d^{\pi_u, \rho_0}_T}\!\left[ s \in \mathcal{V}_T(\mathcal{M}_\kappa) \right] \;\leq\; \varepsilon,
    \label{eq:narrow_viability}
\end{equation}
where $\pi_u$ is the uniform random policy over the bounded action set $\mathcal{A}$, and $d^{\pi_u, \rho_0}_T$ is the state visitation distribution within $T$ steps under $\pi_u$ starting from $\rho_0$.
\end{definition}

The proposition below is stated for a representative cart-pole system, linearized about the upright equilibrium, with $\mathcal{M}_{\omega_n}$ denoting the resulting MDP at natural frequency $\omega_n$. The result specifically considers a bounded position-command interface, matching the relative position-command action spaces used in our experiments. The full setup, assumptions, and proof are in Appendix~\ref{apx-proofs}.

\begin{proposition}[Kernel monotonicity]
\label{prop:kernel}
For the representative cart-pole system considered in Appendix~\ref{apx-proofs}, under the stated bounded-command feasibility assumptions, the finite-horizon viability kernel of $\mathcal{M}_{\omega_n}$ is monotone non-decreasing with respect to the natural frequency:
\begin{equation}
    \omega_{n,1} \;\leq\; \omega_{n,2} \implies
    \mathcal{V}_T(\mathcal{M}_{\omega_{n,1}})
    \;\subseteq\;
    \mathcal{V}_T(\mathcal{M}_{\omega_{n,2}}).
    \label{eq:kernel_mono}
\end{equation}
\end{proposition}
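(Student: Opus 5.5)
The plan is a simulation (dominance) argument: any behaviour achievable by the cart at natural frequency $\omega_{n,1}$ is also achievable at $\omega_{n,2}\ge\omega_{n,1}$. I will write the linearized cart-pole with the cart driven by a critically damped PD loop,
\begin{equation*}
\ddot x = -2\omega_n \dot x - \omega_n^2 (x - x_d), \qquad \ddot\theta = \frac{g}{\ell}\theta - \frac{1}{\ell}\ddot x ,
\end{equation*}
with the bounded command $x_d = x_0 + \alpha a$, $|a|\le 1$. The key observation is that the pole dynamics see the actuator only through the cart acceleration $u := \ddot x$. For fixed $(x,\dot x)$, the set of instantaneously achievable accelerations is
\begin{equation*}
U_{\omega_n}(x,\dot x)=\{-2\omega_n\dot x-\omega_n^2(x-x_0-\alpha a): |a|\le 1\},
\end{equation*}
an interval of half-width $\alpha\omega_n^2$ centred at $-2\omega_n\dot x-\omega_n^2(x-x_0)$.

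The proof then has four steps.

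\emph{Step 1.} I reformulate the problem: the cart-pole at frequency $\omega_n$ is the double-integrator-plus-pole system driven by $u$, subject to the state-dependent constraint $u\in U_{\omega_n}(x,\dot x)$. In the continuous-time zero-order-hold or discrete-step version, the action is held over one step, so I will work with the reachable one-step successor sets instead.

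\emph{Step 2.} I prove a one-step inclusion: for every state $s$, the set of successors reachable in one step under $\omega_{n,1}$ is contained in the set reachable under $\omega_{n,2}$, for states in the region allowed by the appendix's ``bounded-command feasibility assumptions''. I expect those assumptions to be exactly what makes this hold. Concretely, they should guarantee that over the relevant region the centre shift, $2(\omega_{n,2}-\omega_{n,1})|\dot x| + (\omega_{n,2}^2-\omega_{n,1}^2)|x-x_0|$, is at most the growth of the half-width, $\alpha(\omega_{n,2}^2-\omega_{n,1}^2)$. Then $U_{\omega_{n,1}}\subseteq U_{\omega_{n,2}}$ pointwise, and any command $a_1$ at $\omega_{n,1}$ can be matched by some $|a_2|\le1$ reproducing the same $u$.

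\emph{Step 3.} I construct the dominating policy by induction on $t$. Given a policy $\pi_1$ that keeps $s_0$ viable for $T$ steps under $\mathcal{M}_{\omega_{n,1}}$, I define $\pi_2$ to pick, at each state on the $\pi_1$ trajectory, the action $a_2$ that reproduces the $\omega_{n,1}$ acceleration. This yields an identical trajectory, so it avoids $\mathcal{S}_\text{term}$ for all $t\le T$ (almost surely, since the linearized system is deterministic). Hence $s_0\in\mathcal{V}_T(\mathcal{M}_{\omega_{n,2}})$.

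\emph{Step 4.} Since the matching is exact, the trajectory stays inside the feasibility region. Because $\mathcal{S}_\text{term}$ is a box on $(x,\theta)$ (pole angle and track limits), the state constraints are inherited along it.

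The main obstacle is Step 2, because with a piecewise-constant action over a step, matching the acceleration pointwise is not the same as matching it over the step. The PD loop smooths the command, so identical $u(t)$ profiles need not be reachable. My first approach would be to argue in continuous time with the command updated at the control rate in the limit, and show the inclusion $U_{\omega_{n,1}}\subseteq U_{\omega_{n,2}}$ under the feasibility assumption. If the discrete setting must be kept, I would fall back on comparing reachable sets of the linear system via its step response: higher $\omega_n$ gives a step response that dominates and settles faster toward any target in $[x_0-\alpha,x_0+\alpha]$. I would then conclude with a convexity argument that the $\omega_{n,1}$ one-step reachable set of $(x,\dot x,\theta,\dot\theta)$ lies within the $\omega_{n,2}$ one.
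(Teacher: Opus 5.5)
Your core mechanism is the paper's: $\omega_n$ enters only the cart equation and the pole sees the actuator only through $\ddot x$. So a viable $\omega_{n,1}$ trajectory is reproduced at $\omega_{n,2}$ by whatever command generates the same $\ddot x$.

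The gap is Step 2. You expect the appendix's assumptions to give the pointwise inclusion $U_{\omega_{n,1}}(s)\subseteq U_{\omega_{n,2}}(s)$ on some state region, but they are of a different kind. In the appendix's notation ($x_0=0$, $\alpha=\bar x_d$):
\begin{itemize}
\item (A1) says the particular viable command has a margin, $|x_d^{(1)}(t)|\le\bar x_d-\delta$.
\item (A2) says the frequencies are close enough that the inversion correction $\Delta(\omega_{n,1},\omega_{n,2})\le\delta$. Here $\Delta$ bounds the change in command needed to produce the same $\ddot x$ along that trajectory.
\end{itemize}
Neither says anything about the extreme commands $\pm\bar x_d$. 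Your inclusion, by contrast, is equivalent (for $\omega_{n,2}>\omega_{n,1}$) to $\bigl|x-x_0+\tfrac{2\dot x}{\omega_{n,1}+\omega_{n,2}}\bigr|\le\alpha$ along the trajectory, which (A1)--(A2) do not imply. So Step 2 cannot be derived from the stated hypotheses.

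It is also more than you need: Step 3 only requires that the single acceleration actually used be achievable at $\omega_{n,2}$. The paper checks exactly that. It inverts the cart equation, $x_d=x+\tfrac{2}{\omega_n}\dot x+\tfrac{1}{\omega_n^2}\ddot x+\tfrac{mg}{M\omega_n^2}\theta$, and evaluates it at $\omega_{n,2}$ along the $\omega_{n,1}$ trajectory. It then uses $|x_d^{(2)}(t)-x_d^{(1)}(t)|\le\Delta\le\delta$ to get $|x_d^{(2)}(t)|\le\bar x_d$.

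Your linearized cart equation also omits the reaction term $-\tfrac{mg}{M}\theta$. It cancels in your comparison of acceleration sets, but it contributes $\tfrac{mg}{M\omega_n^2}\theta$ to the command and hence to $\Delta$, so it must be kept for this check.

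On discretization, the paper works in continuous time with an arbitrary bounded command signal $x_d(\cdot)$ on $[0,T]$. That is your first option, and you should commit to it. The zero-order-hold fallback fails. The one-step successor set of $s$ is a segment $\Phi_\omega s+\Gamma_\omega[-\bar x_d,\bar x_d]$ in $\mathbb{R}^4$ whose direction depends on $\omega$. Hence the $\omega_{n,1}$ and $\omega_{n,2}$ segments are generically not nested, exact trajectory reproduction is impossible, and no convexity argument recovers a one-step inclusion.

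If you want to keep your set-inclusion route, make it an explicit hypothesis, for instance $|x(t)-x_0|+|\dot x(t)|/\omega_{n,1}\le\alpha$ along the viable trajectory. Since $2/(\omega_{n,1}+\omega_{n,2})\le 1/\omega_{n,1}$, this gives the inclusion for every $\omega_{n,2}\ge\omega_{n,1}$, with no margin or closeness condition. That is a legitimate and arguably cleaner result, but it is a different proposition from the one stated under (A1)--(A2).
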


Intuitively, a higher natural frequency gives the controller more authority and responsiveness to drive the state back toward a reference in a given time, enlarging the set of states from which termination can be avoided over the horizon.

\paragraph{Viable initial-state coverage.}
Define
\[
\mathcal{C}_T(\omega_n) \;\triangleq\; \mathbb{P}_{s_0\sim\rho_0}\!\left[s_0 \in \mathcal{V}_T(\mathcal{M}_{\omega_n})\right],
\]
the probability that the initial state lies in the kernel. By Proposition~\ref{prop:kernel}, $\mathcal{C}_T$ is non-decreasing in $\omega_n$:
\[
\omega_{n,1} \le \omega_{n,2} \;\implies\; \mathcal{C}_T(\omega_{n,1}) \le \mathcal{C}_T(\omega_{n,2}).
\]
Raising the natural frequency moves initial-state mass into the kernel and away from the narrow-viability regime of Definition~\ref{def:narrow_viability}.

\section{Method}
\label{sec:method}

Proposition~\ref{prop:kernel} and the resulting monotonicity of $\mathcal{C}_T$ motivate training at high $\omega_n$, but the policy must ultimately be deployed under the identified hardware value $K_*$. Motivated by this, the Actuator Dynamics Curriculum holds training in the high-stiffness regime $K_0$, until episode length grows, then anneals toward $K_*$ as training progresses. Damping is recomputed each iteration to maintain critical damping ($\zeta = 1$), so that only the closed-loop joint bandwidth changes; the annealing is driven by an exponential moving average of completed episode lengths.

The stiffness $K(t)$ at training iteration $t$ is set by
\begin{equation}
    K(t) = K_0 + \beta(t)^p \, (K_* - K_0),
    \label{eq:curriculum_schedule}
\end{equation}
where $\beta(t) \in [0, 1]$ is the curriculum progress and $p > 0$ controls the shape of the annealing curve (we use $p = 1$, giving linear interpolation). The stiffness is applied uniformly to all joints; the damping at joint $i$ is set to $B_i(t) = 2 \sqrt{K(t)\, M_i}$ to maintain critical damping per joint.

The curriculum progress $\beta(t)$ is computed from a running estimate $\bar{L}(t)$ of the average length of completed episodes,
\begin{equation}
    \beta(t) = \mathrm{clip}\!\left(\frac{\bar{L}(t) - L_{\min}}{L_{\max} - L_{\min}},\, 0,\, 1\right),
    \label{eq:curriculum_progress}
\end{equation}
where $L_{\min}$ and $L_{\max}$ specify the episode-length range over which the schedule moves between $K_0$ and $K_*$. The estimate $\bar{L}(t)$ is maintained as an exponential moving average over completed episodes with smoothing factor $\eta \in (0, 1)$. The full training procedure is summarized in Algorithm~\ref{alg:curriculum} and the curriculum hyperparameters used in the  experiments are reported in Table~\ref{tab:adc_hyperparams}.

\begin{algorithm}[b]
\caption{Actuator Dynamics curriculum}
\label{alg:curriculum}
\begin{algorithmic}[1]
\State \textbf{Input:} initial stiffness $K_0$, target stiffness $K_*$, episode-length thresholds $L_{\min}, L_{\max}$, EMA factor $\alpha$, exponent $p$, joint inertias $\{M_i\}$
\State Initialize $\bar{L} \leftarrow 0$, $K \leftarrow K_0$, $B_i \leftarrow 2\sqrt{K M_i}$ for each joint $i$
\For{each training iteration $t = 1, 2, \ldots$}
    \State Collect rollouts under policy $\pi_\theta$ with actuator parameters $(K, \{B_i\})$
    \State Update $\pi_\theta$ via policy gradient
    \State $\bar{L}_\text{batch} \leftarrow$ mean length of completed episodes in this iteration
    \State $\bar{L} \leftarrow (1 - \eta)\, \bar{L} + \eta\, \bar{L}_\text{batch}$
    \State $\beta \leftarrow \mathrm{clip}\!\left((\bar{L} - L_{\min}) / (L_{\max} - L_{\min}),\, 0,\, 1\right)$
    \State $K \leftarrow K_0 + \beta^p (K_* - K_0)$
    \State $B_i \leftarrow 2\sqrt{K M_i}$ for each joint $i$
\EndFor
\end{algorithmic}
\end{algorithm}

The curriculum holds training in the large-kernel regime until episode lengths grow, then transitions the dynamics to $K_*$, so that the deployed policy is trained on the actuator characteristics it will encounter on hardware.

\begin{figure}[t]
  \centering
  \begin{subfigure}[t]{0.48\linewidth}
    \centering
    \includegraphics[width=\linewidth]{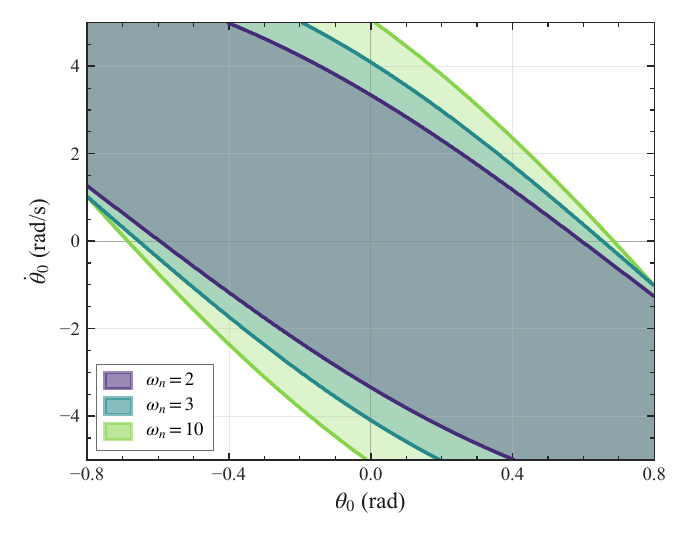}
    \caption{Slice of $\mathcal{V}_T(\mathcal{M}_{\omega_n})$ at $x_0 = \dot{x}_0 = 0$ for three values of $\omega_n$.}
    \label{fig:viab-kernel}
  \end{subfigure}\hfill
  \begin{subfigure}[t]{0.48\linewidth}
    \centering
    \includegraphics[width=\linewidth]{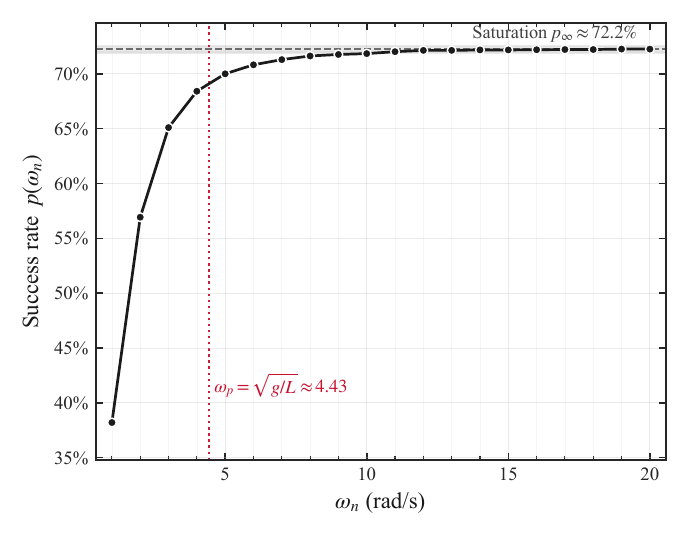}
    \caption{Success rate $p(\omega_n)$.}
    \label{fig:viab-rate}
  \end{subfigure}
  \caption{%
    The admissible region is $\{|x| \le 1\,\text{m},\ |\theta| \le \pi/2\}$. \textbf{(a)} The viable region grows monotonically with $\omega_n$. \textbf{(b)} The success rate increases monotonically with $\omega_n$.%
  }
  \label{fig:viability}
\end{figure}

\section{Experimental Results}
\label{sec:experiments}

\subsection{Experimental Setup}
\label{sec:experimental_setup}

Figure~\ref{fig:architecture} summarizes the training and deployment pipeline. Policies are trained in simulation using PPO~\cite{schulmanProximalPolicyOptimization2017} within the IsaacLab framework~\cite{nvidiaIsaacLabGPUAccelerated2025} with the RSL-RL library ~\cite{schwarkeRSLRLLearningLibrary2025} on a workstation equipped with an Intel Core i9 CPU, NVIDIA RTX 5000 Ada GPU, and 32GB of RAM. The full hyperparameter list can be found in Appendix~\ref{apx-train-sim-params}. End-to-end training of the policy takes approximately 1.5 hours of wall-clock time per seed. The system-identified joint parameters $K_*$ are obtained using a CMA-ES approach as described in~\cite{millerHighPerformanceReinforcementLearning2025}. The system-identified stiffness $K^*=40$ represents the effective closed-loop stiffness of the assembled joint under the Spot control interface. We select $K_0$ by increasing stiffness from $K^*$ in increments of 5 and choosing the smallest value that exceeds a pre-specified $98\%$ task-success threshold over 1000 simulation episodes, giving us $K_0=60$. At deployment, the trained policy is copied to an NVIDIA Jetson Nano mounted on the Boston Dynamics Spot, which runs the inference pipeline and communicates with the robot via the Spot SDK.

\subsection{Empirical Validation of Proposition 1}
\label{sec:cartpole_results}

We empirically validate Proposition~\ref{prop:kernel} on the cart-pole system from Section~\ref{sec:propositions}. For a range of natural frequencies $\omega_n$ under critical damping, we initialize the cart at the origin and sweep the pole angle $\theta_0 \in [-0.8, 0.8]$~rad and angular velocity $\dot{\theta}_0 \in [-5, 5]$~rad/s. For each initial condition we run a closed-loop simulation under an outer LQR designed on the linearized dynamics with the inner PD loop included in the plant model, and record whether the trajectory remains in the admissible region. Figure~\ref{fig:viability} shows the resulting viable region and success rate $p(\omega_n)$ over the sampled grid. The viable region grows monotonically with $\omega_n$, consistent with Proposition~\ref{prop:kernel}.

\subsection{Spot Handstand}
\label{sec:spot_handstand}

The handstand transition task requires the Boston Dynamics Spot to rotate $90^\circ$ in pitch from a four-legged stance to a balanced handstand on its front legs. The policy is a multilayer perceptron (MLP) mapping a proprioceptive observation to 12 joint position offsets. The reward shapes for the handstand pose and penalizes excessive joint torques and rapid action changes, with a large terminal penalty for falls. Episodes terminate on illegal contact between any body link and the ground. The full task specification, including specific observations, reward terms, termination conditions and domain randomization~\cite{tobinDomainRandomizationTransferring2017}, is in Appendix~\ref{apx-mdp}.

We compare two training conditions: \emph{with curriculum}, in which joint stiffness anneals from $K_0$ to $K_*$ according to Algorithm~\ref{alg:curriculum}, and \emph{without curriculum}, in which stiffness is fixed at the system-identified value $K_*$. Each condition is trained for 10 seeds; all other hyperparameters are identical.

\begin{figure}[t]
    \centering
    \includegraphics[width=0.95\linewidth]{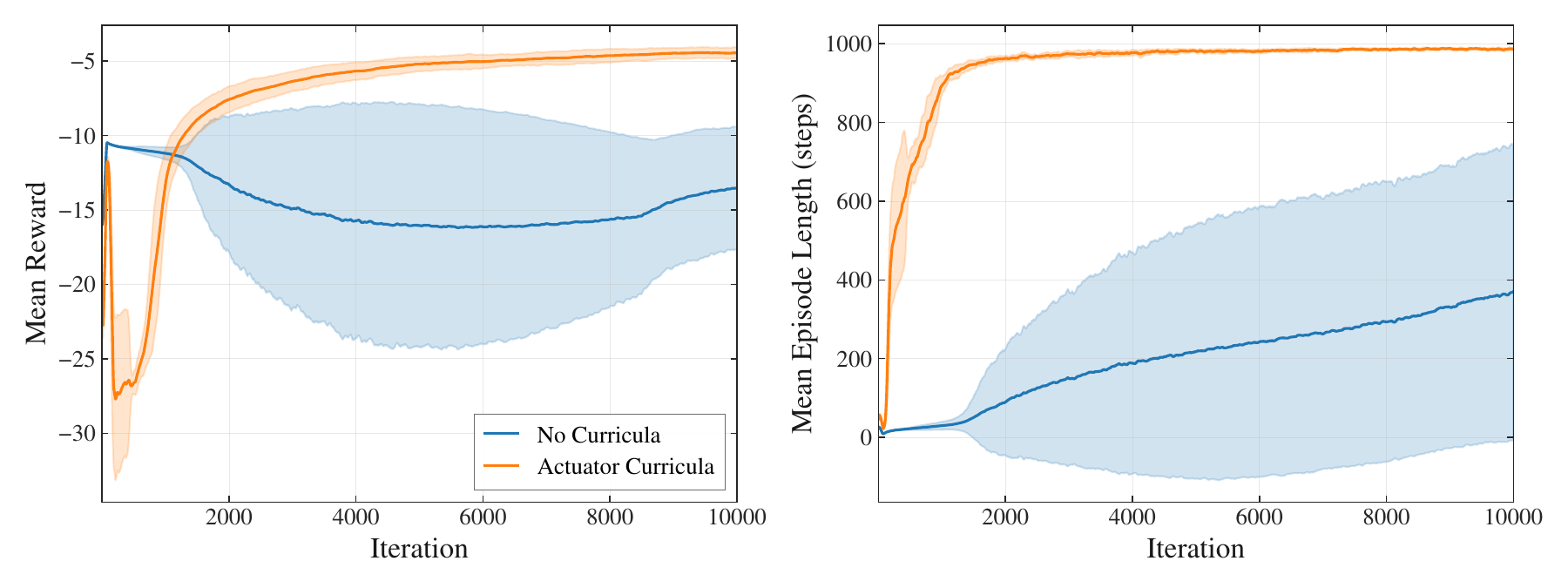}
    \caption{Training curves on the Spot handstand task with 10 seeds. Left: Mean reward. Right: Mean episode length. Without the curriculum, episode length remains around 400 steps after 10,000 policy updates.}
    \label{fig:training_curves}
\end{figure}

\begin{wrapfigure}{l}{0.45\textwidth}
    \centering
    \includegraphics[width=\linewidth]{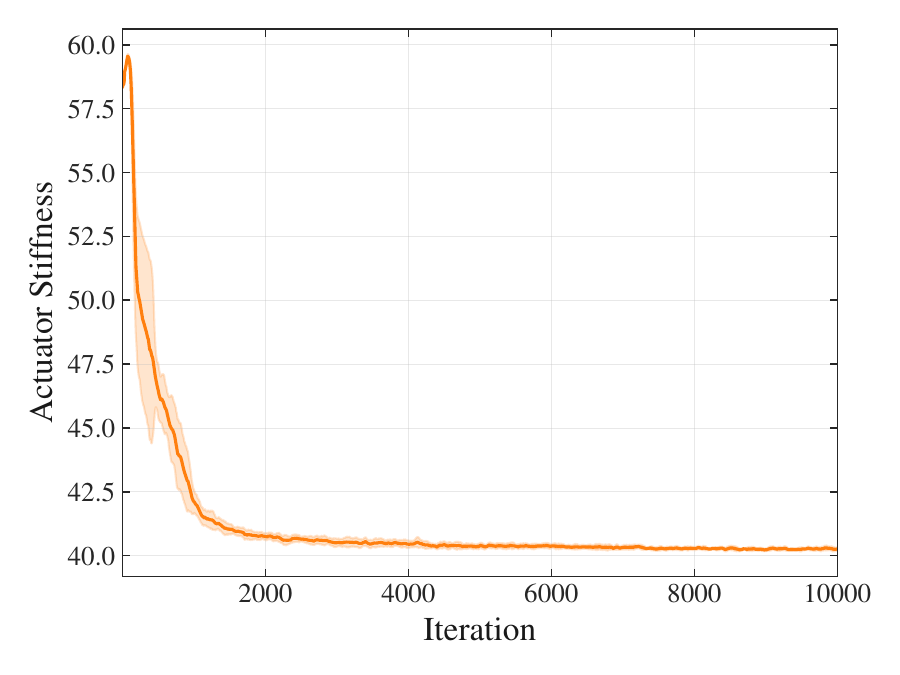}
    \caption{Stiffness $K(t)$ during training under the Actuator Dynamics Curriculum.}
    \label{fig:stiffness_trajectory}
\end{wrapfigure}

Figure~\ref{fig:training_curves} shows the resulting training curves. Without the curriculum, mean episode length grows slightly but plateaus around 400 steps and reward remains low; rollouts terminate early enough that the policy gradient does not produce a coherent training signal, consistent with the narrow-viability characterization of Section~\ref{sec:problem}. With the curriculum, both quantities rise monotonically and converge to their maximum values within $1.5$ hours of wall-clock training. Figure~\ref{fig:stiffness_trajectory} shows the corresponding stiffness trajectory $K(t)$: the schedule anneals from $K_0$ all the way to $K_*$ over the course of training, so the converged policy operates under the system-identified actuator parameters. Figure~\ref{fig:sim_stopmotion} compares the two policies qualitatively: the curriculum-trained policy executes a clean transition into the inverted equilibrium, while the baseline falls eventually. On hardware, we deployed all 10 curriculum-trained policies to the physical Spot as seen in Figure~\ref{fig:cover-real}. Each produced the handstand transition and held the inverted pose under physical disturbances applied by prodding the robot with a pole. On carpeted surfaces, the transition succeeded on the first attempt, while on padded surfaces it sometimes required a second attempt, which the policy executed as an emergent retry. Under significantly larger pushes the robot toppled onto its back and it could not recover from this configuration. In total, across approximately 20 recorded trials on surfaces including carpet, soft padding, and hardwood, all transitions succeeded. Representative deployments are included in the supplementary video.

\subsection{Ablation Study}
\label{sec:ablations}

We compare the Actuator Dynamics Curriculum against five ablations stated in Table~\ref{tab:ablation}, each varying one design choice while keeping all other hyperparameters identical. Each condition is trained for 10 seeds and evaluated over 1000 episodes under the system-identified deployment dynamics $K_*$. The final mean episode length and mean reward are reported in Table~\ref{tab:ablation}.

We additionally test whether the benefit of ADC can be attributed to effective action scaling or exploration-noise annealing. These ablations are reported in Appendix~\ref{sec:additional_ablations}.

\begin{table}[t]
    \centering
    \caption{Ablation study on the Spot handstand task. Mean $\pm$ std across 10 seeds, each evaluated over 1000 episodes.}
    \label{tab:ablation}
    \begin{tabular}{lcc}
        \toprule
        Condition & Episode Length & Reward \\
        \midrule
        Train at $K_*$ & $385 \pm 47$ & $-42.1 \pm 8.7$ \\
        Train at $K_0$ & $215 \pm 38$ & $-68.5 \pm 12.3$ \\
        Randomize $K \in [K_*, K_0]$ & $658 \pm 89$ & $-18.7 \pm 6.2$ \\
        Time-based annealing schedule & $751 \pm 62$ & $-12.3 \pm 5.1$ \\
        No early termination & $547 \pm 56$ & $-35.4 \pm 7.4$ \\
        \midrule
        \textbf{Ours} (Actuator Dynamics curriculum) & $\mathbf{975 \pm 12}$ & $\mathbf{-4.76 \pm 0.83}$ \\
        \bottomrule
    \end{tabular}
\end{table}

\section{Limitations}
\label{sec:limitations}

The analytical result in Section~\ref{sec:propositions} is established on a representative cart-pole system; lifting them to the full multi-rigid-body dynamics of a legged robot would close an interesting theoretical gap. The curriculum is currently demonstrated on the handstand transition task alone, but we are interested in characterizing its behavior on a broader class of narrow-viability tasks. Within the method itself, the schedule currently varies a single scalar stiffness uniformly across joints, with hyperparameters $K_0$, $L_\text{min}$, $L_\text{max}$, $\alpha$, and $p$ chosen by hand; per-joint schedules with joint-specific targets, and principled or adaptive hyperparameter selection, are possible extensions of this work. Finally, hardware deployment is currently reported qualitatively; a more rigorous evaluation with quantitative success metrics and an explicit failure-mode analysis would further validate the sim-to-real transfer.

\begin{figure}[t]
    \centering
    \begin{subfigure}{0.49\linewidth}
        \centering
        \includegraphics[width=\linewidth]{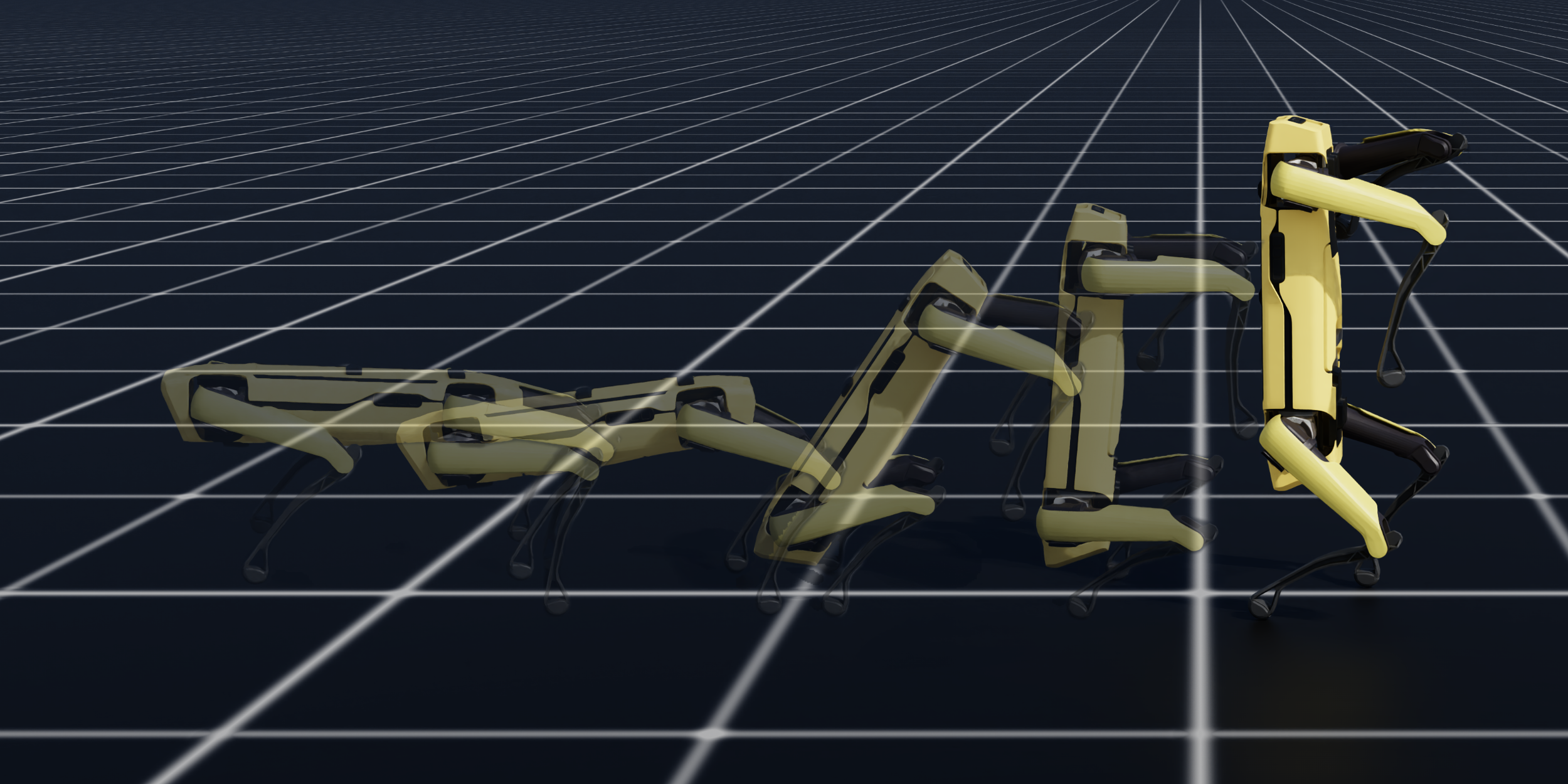}
        \caption{With curriculum}
        \label{fig:good}
    \end{subfigure}
    \hfill
    \begin{subfigure}{0.49\linewidth}
        \centering
        \includegraphics[width=\linewidth]{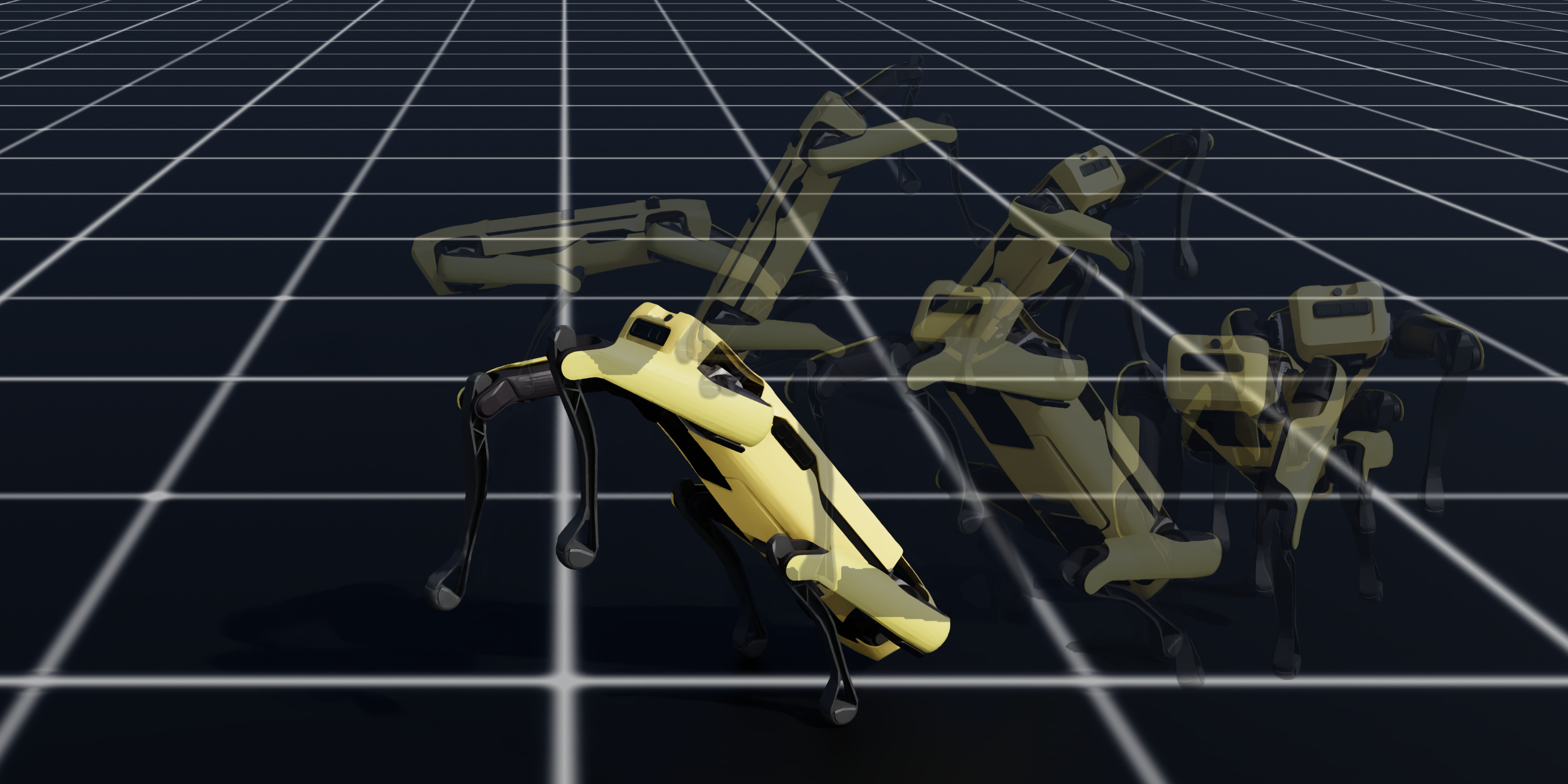}
        \caption{Without curriculum}
        \label{fig:bad}
    \end{subfigure}
    \caption{Snapshots of the handstand transition in simulation after $10000$ policy updates. The curriculum-trained policy \textbf{(a)} executes a clean transition into the inverted equilibrium whereas the baseline \textbf{(b)} fails.}
    \label{fig:sim_stopmotion}
\end{figure}

\section{Conclusion}
\label{sec:conclusion}
We observed that training a handstand transition policy on the Boston Dynamics Spot under system-identified actuator dynamics failed to converge, with most rollouts terminating before producing useful gradient signal. This work formalizes that observation through the notion of narrow-viability and addresses it with the \emph{Actuator Dynamics Curriculum}, a method that initializes joint stiffness at a high value and anneals it toward the system-identified value as completed episode lengths grow. The curriculum is motivated by a result on a representative cart-pole system where higher closed-loop joint natural frequency under critical damping enlarges the viability kernel and the fraction of initial states from which the task is feasible. Empirically, the curriculum converges on the handstand task and transfers zero-shot to hardware across different seeds. More broadly, our results suggest that simulated closed-loop joint dynamics is a useful axis along which to design curricula for tasks in which exploration is bottlenecked by termination conditions rather than by reward signal.

\clearpage

\bibliography{references}  %

\newpage

\section{Appendix}

\begin{figure}[t]
\centering
\begin{tikzpicture}[
  >=Latex, line width=0.9pt,
  spring/.style={decorate,decoration={coil,aspect=0.5,segment length=5pt,amplitude=4.5pt,pre length=4pt,post length=4pt}},
  dim/.style={<->,gray,line width=0.6pt},
  ext/.style={gray!60,line width=0.4pt},
]
\def\xc{2.3}\def\xd{4.9}\def\cw{1.9}\def\ch{1.0}\def\pl{2.8}\def\ang{22}\def\yb{0.18}
\pgfmathsetmacro{\xR}{\xc+\cw/2}
\pgfmathsetmacro{\ytop}{\yb+\ch}
\pgfmathsetmacro{\yspring}{\yb+0.85*\ch}
\pgfmathsetmacro{\ydamp}{\yb+0.28*\ch}

\draw[line width=1.1pt] (-0.7,0) -- (6.3,0);
\foreach \i in {0,...,13}{\draw (-0.5+0.5*\i,0) -- ++(-0.16,-0.22);}

\draw[dashed,gray!70] (\xd,-0.05) -- (\xd,\ytop+0.35);
\node[gray!80] at (\xd+0.12,\ytop+0.62) {$x_d$};

\draw[spring] (\xR,\yspring) -- (\xd,\yspring);
\node at ($({(\xR+\xd)/2},\yspring)+(0,0.42)$) {};
\pgfmathsetmacro{\pist}{\xR+0.85}
\pgfmathsetmacro{\cylL}{\xd-1.0}
\draw (\xR,\ydamp) -- (\pist,\ydamp);
\draw[line width=1.2pt] (\pist,\ydamp-0.19) -- (\pist,\ydamp+0.19);
\draw (\xd,\ydamp) -- (\cylL,\ydamp);
\draw (\cylL,\ydamp-0.24) -- ($(\cylL,\ydamp-0.24)+(0.7,0)$)
      (\cylL,\ydamp+0.24) -- ($(\cylL,\ydamp+0.24)+(0.7,0)$)
      (\cylL,\ydamp-0.24) -- (\cylL,\ydamp+0.24);
\node at ($({(\xR+\xd)/2+0.25},\ydamp)+(0,-0.36)$) {};

\draw[fill=blue!7] (\xc-\cw/2,\yb) rectangle (\xc+\cw/2,\ytop);
\node at (\xc-0.35,\yb+\ch/2) {$M$};
\fill (\xc-\cw/3,\yb) circle (0.16);
\fill (\xc+\cw/3,\yb) circle (0.16);

\coordinate (piv) at (\xc,\ytop);
\coordinate (tip) at ($(piv)+({90-\ang}:\pl)$);
\coordinate (vtop) at ($(piv)+(0,\pl+0.25)$);
\draw[dashed,gray!55] (piv) -- (vtop);
\draw[line width=1.5pt] (piv) -- (tip);
\fill[red!75!black] (tip) circle (0.17);
\node[above right=-1pt] at (tip) {$m$};
\fill (piv) circle (0.05);
\pic[draw,line width=0.7pt,->,"$\theta$"{xshift=1pt},angle radius=9mm,angle eccentricity=1.45]
   {angle=tip--piv--vtop};
\node at ($(piv)!0.55!(tip)+(0.27,0.02)$) {$l$};

\draw[->] ($(tip)+(0.5,0.05)$) -- ++(0,-0.85) node[right,midway]{$g$};

\fill (0,0) circle (1.3pt);
\node[below left=-2pt] at (0,0) {$O$};
\draw[ext] (0,0) -- (0,-1.2);
\draw[ext] (\xc,\yb) -- (\xc,-0.75);
\draw[ext] (\xd,0) -- (\xd,-1.2);
\draw[dim] (0,-0.62) -- (\xc,-0.62) node[midway,below=-1pt]{$x$};
\draw[dim] (0,-1.07) -- (\xd,-1.07) node[midway,below=-1pt]{$x_d$};
\end{tikzpicture}
\caption{Cart-pole with PD control}
\label{fig:cartpole}
\end{figure}

\subsection{Proofs}
\label{apx-proofs}

\paragraph{Representative system.}
We instantiate $\mathcal{M}_{\omega_n}$ on a cart--pole with cart mass $M$, pole mass $m$, and pole length $l$ as shown in Figure~\ref{fig:cartpole}. The action is the commanded cart position $x_d$, drawn from a bounded set $|x_d| \le \bar{x}_d$. A PD controller on the cart with gains $(K,B)$ generates the cart force. We set $K = M\omega_n^2$ and $B = 2M\omega_n$, so that the closed-loop cart dynamics are critically damped ($\zeta = 1$) with natural frequency $\omega_n$. With state $s = (x,\dot x,\theta,\dot\theta)$, the closed-loop dynamics are
\begin{align}
\ddot{x} &= \frac{M\omega_n^2(x_d-x) - 2M\omega_n\dot{x}
   + ml\sin\theta\,\dot\theta^2 - mg\sin\theta\cos\theta}{M + m\sin^2\theta}, \\
\ddot{\theta} &= \frac{(M+m)g\sin\theta - \cos\theta\big(M\omega_n^2(x_d-x)
   - 2M\omega_n\dot{x} + ml\sin\theta\,\dot\theta^2\big)}{l(M + m\sin^2\theta)}.
\end{align}
The termination set is the complement of the admissible box $X = \{\,|x|\le\bar{x},\ |\theta|\le\bar\theta\}$, and $\mathcal{V}_T(\mathcal{M}_{\omega_{n}})$ is as in Definition~\ref{def:viab}.

\paragraph{Bounded-command feasibility assumptions.}
The proof of Proposition~\ref{prop:kernel} uses two assumptions on the bounded-command feasibility of the kernel:
\begin{enumerate}[label=(A\arabic*)]
    \item \textbf{Command margin.} Every $s_0 \in \mathcal{V}_T(\mathcal{M}_{\omega_{n,1}})$ admits a viable command $x_d^{(1)}(\cdot)$ with margin $\delta > 0$, i.e.\ $|x_d^{(1)}(t)| \le \bar{x}_d - \delta$ for $t \in [0,T]$.
    \item \textbf{Closeness.} The pair $(\omega_{n,1}, \omega_{n,2})$ satisfies $\Delta(\omega_{n,1}, \omega_{n,2}) \le \delta$, where $\Delta$ is the inversion-correction bound defined in the proof.
\end{enumerate}

\subsubsection{Proof of Proposition~\ref{prop:kernel}}
\label{app:proof_kernel}
Linearizing about the upright equilibrium $(\theta,\dot\theta)=(0,0)$ reduces the dynamics to
\begin{align}
\ddot{x} &= \omega_n^2(x_d - x) - 2\omega_n\dot{x} - \tfrac{mg}{M}\,\theta,
   \label{eq:lin-cart}\\
\ddot{\theta} &= \tfrac{g}{l}\,\theta - \tfrac{1}{l}\,\ddot{x}.
   \label{eq:lin-pole}
\end{align}
In particular, $\omega_n$ enters only the cart equation~\eqref{eq:lin-cart}, while the pole equation~\eqref{eq:lin-pole} depends on the actuator solely through $\ddot{x}$. Inverting~\eqref{eq:lin-cart} for the command that produces a given $\ddot{x}$,
\begin{equation}
x_d \;=\; x + \tfrac{2}{\omega_n}\dot{x} + \tfrac{1}{\omega_n^2}\ddot{x} + \tfrac{mg}{M\omega_n^2}\,\theta.
\label{eq:inv}
\end{equation}

\begin{proof}
Fix $\omega_{n,1}\le\omega_{n,2}$ and $s_0\in\mathcal{V}_T(\mathcal{M}_{\omega_{n,1}})$. By Definition~\ref{def:viab} and (A1), some $x_d^{(1)}(\cdot)$ keeps the trajectory $\bigl(x(\cdot),\dot{x}(\cdot),\theta(\cdot),\dot{\theta}(\cdot)\bigr)$ in $X$ for $t\in[0,T]$ with $|x_d^{(1)}(t)|\le\bar{x}_d-\delta$. Let $x_d^{(2)}(t)$ be~\eqref{eq:inv} evaluated at $\omega_{n,2}$ along the same trajectory. Substituting into~\eqref{eq:lin-cart} at $\omega_{n,2}$ recovers the same $\ddot{x}$, and~\eqref{eq:lin-pole} then yields the same $\ddot{\theta}$; with identical initial conditions, the trajectory is reproduced under $\omega_{n,2}$ and stays in $X$ on $[0,T]$.

For admissibility, subtracting~\eqref{eq:inv} at the two frequencies gives
\[
|x_d^{(2)}(t) - x_d^{(1)}(t)| \;\le\; \Delta(\omega_{n,1},\omega_{n,2}),
\]
where $\Delta\to 0$ as $\omega_{n,2}\to\omega_{n,1}$ and is bounded on viable trajectories. By (A2), $\Delta\le\delta$, hence $|x_d^{(2)}(t)|\le(\bar{x}_d-\delta)+\delta=\bar{x}_d$. Therefore $s_0\in\mathcal{V}_T(\mathcal{M}_{\omega_{n,2}})$, and since $s_0$ was arbitrary, $\mathcal{V}_T(\mathcal{M}_{\omega_{n,1}})\subseteq\mathcal{V}_T(\mathcal{M}_{\omega_{n,2}})$.
\end{proof}

\subsection{Task MDP Specification}
\label{apx-mdp}

\subsubsection{Observations}
\label{app:task:observations}

At each control step the policy receives a single concatenated observation vector $\mathbf{o}_t \in \mathbb{R}^{45}$, summarized in Table~\ref{tab:obs}. The base linear and angular velocities together with the projected gravity vector encode the base motion and orientation and joint positions are expressed relative to the robot's default standing pose.

\begin{table}[H]
\centering
\caption{Observation terms.}
\label{tab:obs}
\begin{tabular}{llc}
\toprule
Term & Symbol & Dim \\
\midrule
Base linear velocity & $\mathbf{v}_b$ & 3 \\
Base angular velocity & $\boldsymbol{\omega}_b$ & 3 \\
Projected gravity on base frame & $\hat{\mathbf{g}}$ & 3 \\
Joint positions & $\mathbf{q} - \mathbf{q}_0$ & 12 \\
Joint velocities & $\dot{\mathbf{q}}$ & 12 \\
Previous action & $\mathbf{a}_{t-1}$ & 12 \\
\midrule
Total & $\mathbf{o}_t$ & 45 \\
\bottomrule
\end{tabular}
\end{table}

\subsubsection{Actions}
\label{sec:task_actions}

The policy outputs 12 relative joint-position commands $a \in \mathcal{A}$. Actions are clipped component-wise to $a_i \in [-1,1]$ and converted to joint-position setpoints according to
\begin{equation}
    q_d = q_0 + \alpha a,
\end{equation}
where $q_0$ is the nominal standing configuration and $\alpha$ is the
action scale used throughout training.

\subsubsection{Rewards}
\label{app:task:rewards}

The reward function is a weighted sum of penalty terms. The policy is rewarded for maintaining the handstand pose at the target height and orientation, keeping the hind feet off the ground, and producing smooth low-effort actions. The terms and their weights are listed in Table~\ref{tab:rewards}.

\begin{table}[H]
\centering
\caption{Reward terms and weights.}
\label{tab:rewards}
\begin{tabular}{llr}
\toprule
Term & Description & Weight \\
\midrule
Base orientation & Deviation from target orientation (L2) & $-1.0$ \\
Base height & Deviation from target height (L2) & $-0.5$ \\
Hind-foot contact & Either hind foot contacting the ground & $-10.0$ \\
Base linear velocity & Linear velocity of the base (L2) & $-0.01$ \\
Base angular velocity & Angular velocity of the base (L2) & $-0.05$ \\
Joint torques & Joint torque magnitudes (L2) & $-5 \times 10^{-4}$ \\
Action rate & Change in commanded action between steps (L2) & $-0.01$ \\
Termination & Body or any upper leg link contacts the ground & $-500$ \\
\bottomrule
\end{tabular}
\end{table}

\subsubsection{Domain Randomization}
\label{app:task:randomization}

To improve transfer from simulation to hardware, several physical parameters are randomized during training. Randomizations are applied at three points: once at environment startup, at every episode reset, and at random intervals during episodes. The full list is given in Table~\ref{tab:dr}.

\begin{table}[H]
\centering
\caption{Domain randomization parameters. All ranges are sampled uniformly.}
\label{tab:dr}
\begin{tabular}{ll}
\toprule
Parameter & Range \\
\midrule
\multicolumn{2}{l}{\textit{Startup}} \\
Robot static friction & $[0.3, 1.0]$ \\
Robot dynamic friction & $[0.3, 0.8]$ \\
Base mass perturbation & $\pm 2.5$ kg \\
\multicolumn{2}{l}{\textit{Reset}} \\
Base position $(x, y)$ & $\pm 0.5$ m \\
Base yaw & $\pm \pi$ rad \\
Base linear velocity & $(\pm 1.5,\, \pm 1.0,\, \pm 0.5)$ m/s \\
Base angular velocity & $(\pm 0.7,\, \pm 0.7,\, \pm 1.0)$ rad/s \\
Joint position offset & $\pm 0.2$ rad \\
Joint velocity & $\pm 2.5$ rad/s \\
\multicolumn{2}{l}{\textit{Interval}} \\
Push interval & $[2,\, 5]$ s \\
Push velocity & $\pm 0.5$ m/s \\
\bottomrule
\end{tabular}
\end{table}

\subsection{Training and Simulation Parameters} 
\label{apx-train-sim-params}

\begin{table}[H]
\centering
\caption{IsaacSim Environment Configuration}
\begin{tabular}{ll}
\toprule
\multicolumn{2}{c}{\textbf{Simulation Parameters}} \\
\midrule
Physics timestep & 0.002 s (500 Hz) \\
Control decimation & 10 \\
Episode length & 20 s (1000 policy steps) \\
Parallel environments & 4096 \\
\bottomrule
\end{tabular}
\label{tab:sim_config}
\end{table}

\begin{table}[H]
\centering
\caption{PPO Configuration}
\begin{tabular}{ll}
\toprule
\multicolumn{2}{c}{\textbf{PPO Hyperparameters}} \\
\midrule
Rollouts & 24 \\
Learning epochs & 5 \\
Mini batches & 4 \\
Discount factor ($\gamma$) & 0.99 \\
GAE lambda ($\lambda$) & 0.95 \\
Learning rate & $1 \times 10^{-3}$ \\
Learning rate scheduler & Adaptive (KL-based) \\
KL threshold (scheduler) & 0.01 \\
Ratio clip & 0.2 \\
Value clip & 0.2 \\
Entropy loss scale & 0.0025 \\
Value loss scale & 0.5 \\
Gradient norm clip & 1.0 \\
Initial policy noise std & 1.0 \\
Total training iterations & 10{,}000 \\
\bottomrule
\end{tabular}
\label{tab:ppo_config}
\end{table}

\begin{table}[H]
\centering
\caption{Network Configuration}
\begin{tabular}{ll}
\toprule
\multicolumn{2}{c}{\textbf{Network Architecture}} \\
\midrule
\multicolumn{2}{l}{\textit{Policy network}} \\
Input & $\boldsymbol{o}_t$ \\
Observation normalization & disabled \\
Hidden layers & [512, 256, 128] \\
Activations & elu \\
Output & $\boldsymbol{a}_t$ \\
\multicolumn{2}{l}{\textit{Value network}} \\
Input & $\boldsymbol{o}_t$ \\
Observation normalization & disabled \\
Hidden layers & [512, 256, 128] \\
Activations & elu \\
Output & $V(\boldsymbol{o}_t)$ \quad (value estimate) \\
\bottomrule
\end{tabular}
\label{tab:network_config}
\end{table}

\begin{table}[H]
    \centering
    \caption{Actuator Dynamics Curriculum hyperparameters.}
    \label{tab:adc_hyperparams}
    \begin{tabular}{lc}
        \toprule
        Parameter & Value \\
        \midrule
        Initial stiffness $K_0$ & $60$ \\
        Target stiffness $K^*$ & $40$ \\
        Episode-length threshold $L_{\min}$ & 0 \\
        Episode-length threshold $L_{\max}$ & 1000 \\
        EMA smoothing factor $\eta$ & 0.05 \\
        Annealing exponent $p$ & $1$ \\
        \bottomrule
    \end{tabular}
\end{table}

\subsection{Additional Ablations}
\label{sec:additional_ablations}

We evaluate whether ADC's benefit can be explained by action scaling or exploration noise using three additional ablations: retuned action scaling at $K^*$, action-normalized ADC with $K(t)\alpha(t)=K^*\alpha^*$, and an episode-length-driven action-noise curriculum. Training curves are shown in Fig.~\ref{fig:training-curves} and final results in Table~\ref{tab:ablation-new}. The normalized-action ADC performs close to the original ADC, while the other two conditions provide only modest gains over the baseline.

\begin{figure}[b]
  \centering
  \includegraphics[width=\linewidth]{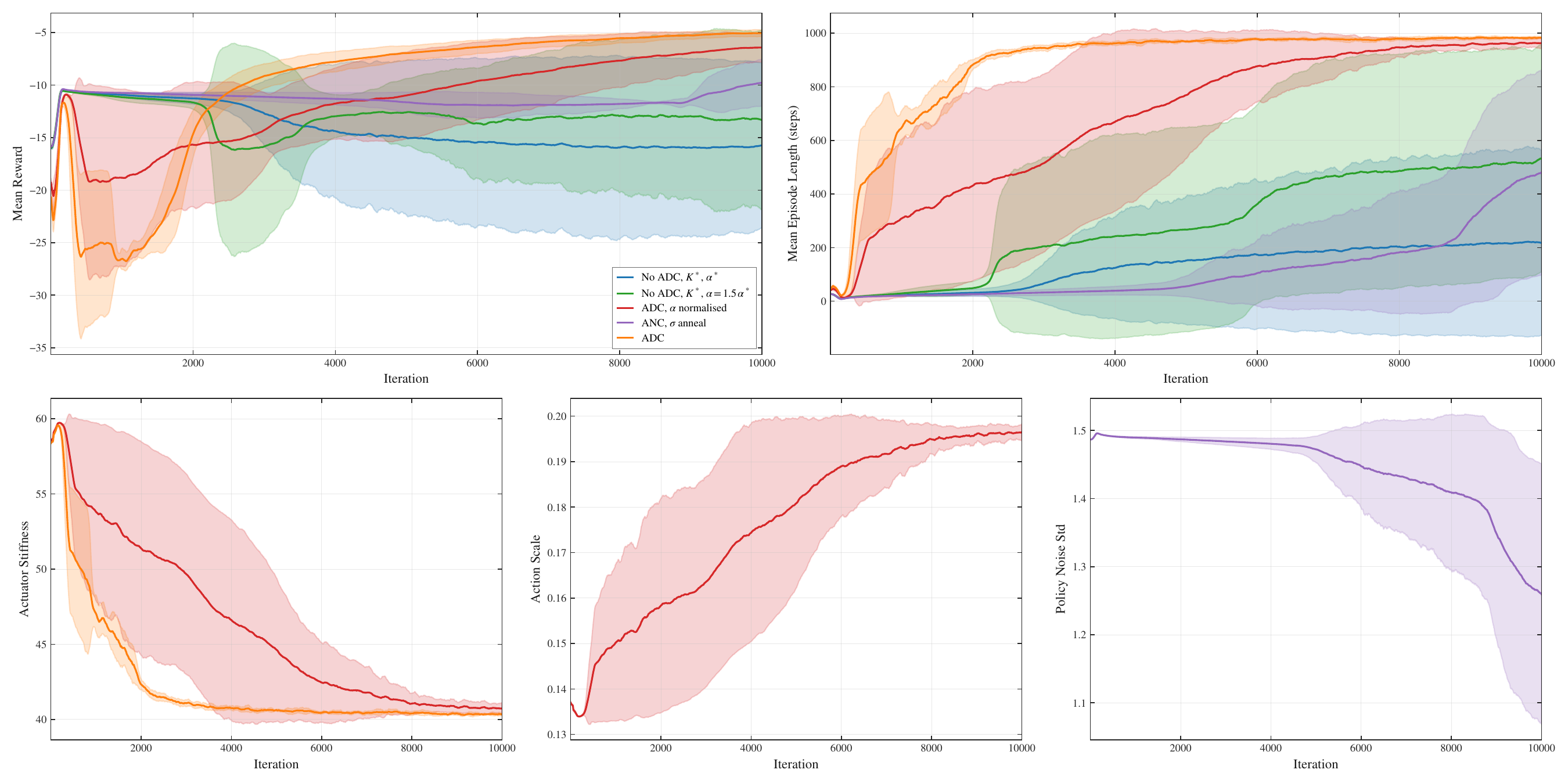}
  \caption{Training curves for the additional controlled ablations.}  
  \label{fig:training-curves}
\end{figure}

\begin{table}[t]
    \centering
    \setlength{\tabcolsep}{4pt}
    \caption{Additional controlled ablations on action scaling and exploration noise. Mean $\pm$ standard deviation across 10 seeds, with each policy evaluated over 1000 episodes under the system-identified dynamics $K^*$.}    \label{tab:ablation-new}
    \begin{tabular}{lcc}
        \toprule
        Condition & Ep. Length & Reward \\
        \midrule
        Train at $K_*$ (baseline)        & $385 \pm 47$ & $-42.1 \pm 8.7$ \\
        Train at $K_*$, retuned $\alpha$ & $453 \pm 52$ & $-37.4 \pm 7.4$ \\
        Train at $K_*$, noise curriculum & $447 \pm 48$ & $-32.1 \pm 6.3$ \\
        Ours, normalised $\alpha$        & $952 \pm 40$ & $-5.4 \pm 1.5$ \\
        \midrule
        \textbf{Ours} (ADC)              & $\mathbf{975 \pm 12}$ & $\mathbf{-4.76 \pm 0.83}$ \\
        \bottomrule
    \end{tabular}
  \vspace{-18pt}    
\end{table}

\end{document}